\theoremstyle{plain}
\newtheorem{theorem}{Theorem}
\newtheorem{lemma}{Lemma}
\theoremstyle{definition}
\theoremstyle{remark}
\title{Episodic Return Decomposition by Difference of Implicitly Assigned Sub-Trajectory Reward}
\author{%
    Haoxin Lin\textsuperscript{\rm 1,\rm 2,\rm 3},~~
    Hongqiu Wu\textsuperscript{\rm 1,\rm 2},~~
    Jiaji Zhang\textsuperscript{\rm 1,\rm 2},~~
    Yihao Sun\textsuperscript{\rm 1,\rm 2},~~
    Junyin Ye\textsuperscript{\rm 1,\rm 2,\rm 3},~~
    Yang Yu\textsuperscript{\rm 1,\rm 2,\rm 3,\rm 4}\thanks{Corresponding Author} \\   
    \textsuperscript{\rm 1}National Key Laboratory for Novel Software Technology, Nanjing University, Nanjing, China\\
    \textsuperscript{\rm 2}School of Artificial Intelligence, Nanjing University, Nanjing, China\\
    \textsuperscript{\rm 3}Polixir Technologies, Nanjing, China\\
    \textsuperscript{\rm 4}Peng Cheng Laboratory, Shenzhen, 518055, China\\
    \{linhx, wuhq, zhangjj, sunyh, yejy\}@lamda.nju.edu.cn, yuy@nju.edu.cn
}
\date{}
\begin{document}

\maketitle

\begin{abstract}
Real-world decision-making problems are usually accompanied by delayed rewards, which affects the sample efficiency of Reinforcement Learning, especially in the extremely delayed case where the only feedback is the episodic reward obtained at the end of an episode. Episodic return decomposition is a promising way to deal with the episodic-reward setting. Several corresponding algorithms have shown remarkable effectiveness of the learned step-wise proxy rewards from return decomposition. However, these existing methods lack either attribution or representation capacity, leading to inefficient decomposition in the case of long-term episodes. In this paper, we propose a novel episodic return decomposition method called Diaster (\textbf{D}ifference of \textbf{i}mplicitly \textbf{a}ssigned \textbf{s}ub-\textbf{t}rajectory \textbf{re}ward). Diaster decomposes any episodic reward into credits of two divided sub-trajectories at any cut point, and the step-wise proxy rewards come from differences in expectation. We theoretically and empirically verify that the decomposed proxy reward function can guide the policy to be nearly optimal. Experimental results show that our method outperforms previous state-of-the-art methods in terms of both sample efficiency and performance. The code is available at https://github.com/HxLyn3/Diaster.
\end{abstract}

\section{Introduction}
\label{intro}
Reinforcement Learning (RL) has made empirical successes in several simulated domains and attracted close attention in recent years. Sparse and delayed reward, usually facing a range of real-world problems such as industrial control \cite{industry}, molecular design \cite{molecule}, and recommendation systems \cite{recom}, is one of the critical challenges hindering the application of RL in reality. The trouble with the delayed reward function is that it affects RL's sample efficiency when using it to estimate the value function. Specifically, delayed rewards introduce numerous noneffective updates and fitting capacity loss \cite{infer} into Temporal-Difference (TD) Learning and result in a high variance in Monte-Carlo (MC) Learning \cite{rudder}. The problem is more severe in the highly delayed case that the feedback appears only at the end of the episode.

It is essential to design a proxy Markovian reward function as the substitute for the original delayed one since current standard RL algorithms prefer instant feedback at each environmental step. The apparent solution is designing the required proxy reward function by handcraft, but it depends on domain knowledge and is not general for all tasks. Reward shaping \cite{Mataric94, bicycleshaping, pbrs}, one kind of automatical reward design method, is widely observed that it can accelerate RL. However, the reshaped rewards are coupled with the original rewards and are ineffective in environments where only an episodic reward is fed back. Recent work \cite{rudder, ircr, rd, rrd} proposes to decompose the episodic reward into a string of step-wise rewards through the trajectory, which seems promising to deal with the long-term delay.

\begin{figure*}[pt!]
\centering
\subfigure[Attribute a return to the full trajectory.\quad]{
    \centering
    \includegraphics[height=0.195\linewidth]{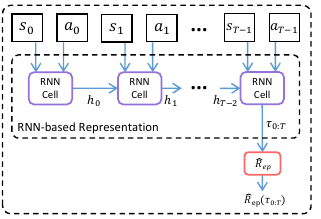}
    \label{intro_fig1}
}%
\subfigure[Attribute a return to each step.\quad]{
    \centering
    \includegraphics[height=0.195\linewidth]{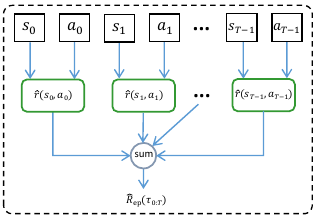}
    \label{intro_fig2}
}%
\subfigure[Attribute a return to any two cut sub-trajectories.]{
    \centering
    \includegraphics[height=0.195\linewidth]{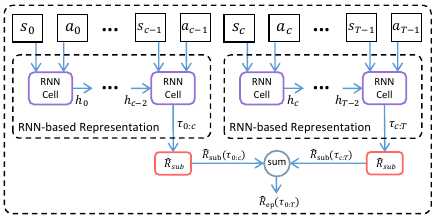}
    \label{intro_fig3}
}%
\caption{Comparison of different episodic return decomposition. (a) introduces a temporal structure to represent the episodic return function $\hat{R}_{\text{ep}}(\tau_{0:T})$. The step-wise proxy rewards can be assigned via contribution analysis from a backward view. (b) directly decomposes the episodic return into the step-wise proxy rewards $\{\hat{r}(s_i,a_i)\}_{i=0}^T$. (c) decomposes the episodic return into two sub-trajectory rewards, $\hat{R}_{\text{sub}}(\tau_{0:c})$ and $\hat{R}_{\text{sub}}(\tau_{c:T})$, for any cut point $c$.}
\label{intro_fig}
\end{figure*}

Previous return decomposition methods can be sorted into two classes, as shown in Figure \ref{intro_fig1} and \ref{intro_fig2}. One pays attention to precisely predicting the long-term episodic return, thinking little of reward redistribution. The representative work is RUDDER \cite{rudder}, which utilizes an LSTM \cite{lstm} network to make a return prediction and assigns rewards to each pair of state-action via contribution analysis from a backward view. Another focuses on step-wise return decomposition without considering temporal structural representations. A little work \cite{rd, podr, rrd} aimed at the least-squares-based return decomposition objective falls into this class. Nevertheless, the above work suffers from a lack of either attribution or representation capacity, leading to inefficient episodic return decomposition in the case of long-term episodes. 

In this paper, we propose to cut an entire trajectory at any time step and decompose the episodic return into two sub-trajectory rewards, as demonstrated in Figure \ref{intro_fig3}. The step-wise reward can be straightforwardly obtained by differencing the implicitly assigned sub-trajectory reward in expectation. Based on this key idea, we formulate a simple yet effective episodic return decomposition method called Diaster (\textbf{D}ifference of \textbf{i}mplicit \textbf{a}ssigned \textbf{s}ub-\textbf{t}rajectory \textbf{re}ward), which can be embedded in any model-free RL algorithm. This new class of return decomposition not only introduces temporal structural representations into the episodic reward function but also properly attributes the return to different parts of a given trajectory. Hence, a practical proxy reward function can be learned to guide the policy to be optimal.

In general, our contributions are summarized as follows:
\begin{itemize}
    \item We elaborate on the formulation of our new episodic return decomposition method, \emph{i.e.}, Diaster, and present its practical neuron-based implementation.
    \item We theoretically verify that the step-wise proxy reward function learned through Diaster is return-equivalent to the original MDP in expectation and capable of guiding the policy to be nearly optimal.
    \item We empirically show that Diaster can provide effective proxy rewards for RL algorithms and outperform previous state-of-the-art return decomposition methods in terms of both sample efficiency and performance.
    \item We conduct an ablation study to show that setting the number of cut points for Diaster can achieve a trade-off between long-term representation and attribution.
\end{itemize}

\section{Related Work}
\label{related}
\subsection{Delayed Reward}
The setting of delayed rewards \cite{delay1, delay2, delay3} usually accompanies real-world RL problems, resulting in a high bias in TD Learning and high variance in MC Learning \cite{rudder}. This paper considers the extremely delayed case in which only one episodic feedback can be obtained at the end of an episode. The problem is how to automatically find a proxy dense reward function to encourage an efficient value estimation in any task with the episodic-reward setting.

\subsection{Reward Shaping}
Reward shaping is one of the popular methods of replacing the original rewards with more frequent feedback to accelerate RL agents' learning. Early work \cite{robotshaping, Mataric94, bicycleshaping} focuses on designing the shaping reward function and shows an improved learning efficiency. However, these methods need more consideration for the consistency of the optimal policy. Potential-based reward shaping (PBRS) \cite{pbrs} proposes to guarantee the policy invariance property by restricting the shaping function to a difference of potential functions defined over states. Several variants \cite{pba, dpbrs, dpba} of PBRS introduce more information into the potential function to provide additional expressive power for proxy rewards and improve the sample efficiency. These potential-based methods do not correspond to an optimal return decomposition as their proxy rewards are additively coupled with the original rewards, according to \cite{rudder}. 

\subsection{Return Decomposition}
Episodic return decomposition is promising to tackle the rewards with an extreme delay. RUDDER \cite{rudder} uses an LSTM network \cite{lstm} to predict the return of an episode and decompose the predicted return into contributions of each pair of state-action in the sequence. \cite{rd} directly optimizes the least-squares-based decomposition objective, \emph{i.e.}, the expected square error between the sum of the state-action pairs' proxy rewards and the corresponding trajectory-wise reward. This direct return decomposition method is extended by \cite{podr} to decompose the episodic return into the rewards of all time intervals. IRCR \cite{ircr} considers a uniform reward redistribution of the return to each constituent state-action pair, which is an intuitive solution of the optimal reward problem (ORP) \cite{wheredr, imrl}. To bridge direct return decomposition and IRCR, RRD \cite{rrd} lets the proxy rewards from a random subsequence of the trajectory to fit the episodic return, achieving a trade-off between the decomposition complexity and the estimation accuracy.

\subsection{Temporal Credit Assignment}
Long-term temporal credit assignment (TCA) \cite{TCA}, attributing agents' actions to future outcomes that may be observed after a long time interval, is a long-lasting problem in RL. General TCA frameworks \cite{adaptlambda, pairw} are proposed to extend the $\lambda$-return \cite{lambda}, which is one of the earliest heuristic mechanisms for TCA. Hindsight credit assignment (HCA) \cite{hca}
assigns credit to past actions by reasoning in the backward view, opening up a new family of algorithms. Temporal value transport (TVT) \cite{tvt} augments the capabilities of memory-based agents and uses recall of specific memories to credit actions from the past, achieving an efficient credit transport. 

\section{Preliminaries}
\label{prelim}
A finite-horizon Markov Decision Process (MDP) with an episodic-reward setting is characterized by a tuple $\langle\mathcal{S}, \mathcal{A}, P, \rho_0, r, R_{\text{ep}}, T\rangle$. The episodic horizon is limited by the scalar $T$. Each episode starts with an environmental state $s_0\in\mathcal{S}$ sampled from the initial state distribution $\rho_0$. At each time step $t\in\{0,\ldots,T-1\}$, after agent observing the state $s_t$ and performing a $a_t\in\mathcal{A}$, the environment transitions to the next state $s_{t+1}\in\mathcal{S}$ according to the unknown transition function $P(s_{t+1}|s_t,a_t)$. The step-wise reward $r(s_t,a_t)$ is unobservable to the agent, while the only feedback called episodic reward $R_{\text{ep}}(\tau_{0:T})$ is accessible at the end of the trajectory $\tau_{0:T}=(s_0,a_0,s_1,a_1,...,s_{T-1},a_{T-1})$. In this paper, we consider an assumption that the episodic reward comes from the sum of the step-wise rewards, \emph{i.e.}, $R_{\text{ep}}(\tau_{0:T})=\sum_{t=0}^{T-1}r(s_t,a_t)$, which is satisfied in most cases. The goal under this setting is to find the optimal policy that maximizes the expectation of episodic return (\emph{i.e.}, episodic reward): $\mathbb{E}_{\rho^\pi}\left[R_{\text{ep}}(\tau_{0:T})\right]$, where $\rho^\pi$ induced by the dynamics function $P(s_{t+1}|s_t,a_t)$ and the policy $\pi(a_t|s_t)$ is the on-policy distribution over states.

The state-action value function defined as
\begin{equation}
Q^\pi(s_t,a_t)=\mathbb{E}_{P,\pi}\left[\left.\sum_{i=0}^{T-t-1}r(s_{t+i},a_{t+i})\right|s_t,a_t\right]
\end{equation}
cannot be estimated since the immediate reward isn't fed back from the environment. Directly using the delayed episodic reward to learn the state-action value function leads to the optimal policy consistently, as proven by \cite{rudder}. However, it isn't an appropriate choice as it introduces high bias in TD Learning and high variance in MC Learning. The promising way to solve any MDP with an episodic-reward setting is to find a proxy dense reward function $\hat{r}(s,a)$ that nearly satisfies the sum-from decomposition,
\begin{equation}
\label{rd}
R_{\text{ep}}(\tau_{0:T})\approx\hat{R}_{\text{ep}}(\tau_{0:T})=\sum_{t=0}^{T-1}\hat{r}(s_t,a_t),
\end{equation}
which is considered in several previous work \cite{rudder, rd, podr, rrd, srltca}. 

\section{Diaster: Difference of Implicitly Assigned Sub-Trajectory Reward}
\label{diaster}
In this section, we will first elaborate on our return decomposition method Diaster (\textbf{D}ifference of \textbf{i}mplicit \textbf{a}ssigned \textbf{s}ub-\textbf{t}rajectory \textbf{re}ward), and next verify the effectiveness of this method theoretically. Then we will present a practical neuron-based implementation of Diaster.

\subsection{Description of Diaster}
Rather than directly decomposing the episodic reward into the step-wise rewards, we consider finding a proxy sub-trajectory reward function $\hat{R}_{sub}$ that satisfies
\begin{gather}
\hat{R}_{\text{sub}}(\varnothing) =0,  \\
\hat{R}_{\text{sub}}(\tau_{0:T}) \approx R_{\text{ep}}(\tau_{0:T}),\\
\hat{R}_{\text{sub}}(\tau_{0:c}) + \hat{R}_{\text{sub}}(\tau_{c:T}) \approx R_{\text{ep}}(\tau_{0:T}), \forall c\in\{1,\ldots,T-1\}, \label{strd}
\end{gather}
for any episode $\tau_{0:T}$, where $\tau_{0:c}$ is the former $c$-step sub-trajectory, $\tau_{c:T}$ is the latter $\{T-c\}$-step sub-trajectory, and $\hat{R}_{\text{sub}}(\varnothing)$ is the supplementary definition of the empty sub-trajectory $\tau_{0:0}=\varnothing$. Compared to the step-wide decomposition \eqref{rd}, this formulation of decomposition can ease the difficulty of credit assignment for several reasons:
\begin{itemize}
    \item Only two components need to be assigned the reward after cutting an episode at any cut-point $c$.  
    \item $T$ cut-points to cut an episode provides more relations between the proxy reward function and the episodic reward function, increasing the training samples for episodic return decomposition.
    \item The step-wise decomposition \eqref{rd} can be regarded as a special case of the subtrajectory-wise decomposition \eqref{strd} that lets $\hat{R}_{\text{sub}}(\tau_{0:c})=\sum_{t=0}^{c-1}\hat{r}(s_t,a_t)$ and $\hat{R}_{\text{sub}}(\tau_{c:T})=\sum_{t=c}^{T-1}\hat{r}(s_t,a_t)$. However, a sub-trajectory reward does not have to come from the sum of its state-action pairs' rewards in practice, while it can be any other function of the sub-trajectory. The general formulation can relax the assumption that decides how the episodic reward comes.
    \item This form of return decomposition introduces some temporal structural representations into the episodic reward function, encouraging the utilization of RNNs \cite{rnn} to take advantage of the sequential information.
\end{itemize} 

Given any episode $\tau_{0:T}$ sampled by the policy $\pi$, the proxy reward connected with its preceding sub-trajectory is obtained by difference as
\begin{gather}
    \hat{R}_d(s_0, a_0) = \hat{R}_{\text{sub}}(\tau_{0:1}) - \hat{R}_{\text{sub}}(\varnothing),\\
    \hat{R}_d(\tau_{0:t}, s_t, a_t) = \hat{R}_{\text{sub}}(\tau_{0:t+1}) - \hat{R}_{\text{sub}}(\tau_{0:t}),
\end{gather}
for any $(s_t, a_t)$ belonging to $\tau_{0:T}$.
The method of difference is also considered by \cite{rudder} to guarantee the equivalent episodic return since
\begin{equation}
    \sum_{t=0}^{T-1}\hat{R}_d(\tau_{0:t}, s_t, a_t)=\hat{R}_{\text{sub}}(\tau_{0:T}) \approx R_{\text{ep}}(\tau_{0:T}).
\end{equation}
However, the above reward function $\hat{R}_d$ is still delayed and non-Markovian. Thus we define the step-wise proxy reward function as
\begin{equation}
\label{stepr}
    \hat{r}^\pi(s_t,a_t)=\mathbb{E}_{\tau_{0:t}\sim\Gamma^\pi_t(\cdot|s_t)}\left[\hat{R}_d(\tau_{0:t}, s_t, a_t)\right],
\end{equation}
where
\begin{equation}
    \Gamma^\pi_t(\tau_{0:t}|s_t)=\frac{\mathcal{T}_t^\pi(\tau_{0:t})P(s_t|s_{t-1},a_{t-1})}{\rho^\pi(s_t)}
\end{equation}
is the distribution conditioned on the state $s_t$ over the $t$-step sub-trajectory space, while $\mathcal{T}_t^\pi(\tau_{0:t})$ is the unconditioned distribution that comes from
\begin{equation}
    \mathcal{T}_t^\pi(\tau_{0:t})=\rho_0(s_0)\pi(a_0|s_0)\prod_{i=1}^{t-1}P(s_i|s_{i-1},a_{i-1})\pi(a_i|s_i).
\end{equation}

\subsection{Analysis of Diaster}
Our analysis in this subsection focuses on determining whether the proxy reward function and the corresponding proxy state-action value function are effective for policy optimization. We start by presenting a simple theorem, offering some insights about the relationship between the step-wise reward $\hat{r}^\pi$ and the sub-trajectory reward $\hat{R}_{\text{sub}}$.
\begin{theorem}
\label{theorem1}
Given any policy $\pi$, the following equation holds for any subsequence length $h\in\{2,\ldots,T\}$ and time step $t\in\{0,\ldots,T-h\}$, that is, 
\begin{equation}
    \mathbb{E}_{\rho^\pi}\left[\sum_{i=t}^{t+h-1}\hat{r}^\pi(s_{i},a_{i})\right]=\mathbb{E}_{\rho^\pi}\left[\hat{R}_{\text{sub}}(\tau_{0:t+h})-\hat{R}_{\text{sub}}(\tau_{0:t})\right].
\end{equation}
\end{theorem}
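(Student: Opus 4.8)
The plan is to reduce the statement to a deterministic telescoping identity followed by one application of the law of total expectation. First I would note that $\hat{R}_d$ was defined precisely as a forward difference of $\hat{R}_{\text{sub}}$, so along any fixed trajectory the inner sum telescopes:
\begin{equation}
\sum_{i=t}^{t+h-1}\hat{R}_d(\tau_{0:i},s_i,a_i)=\sum_{i=t}^{t+h-1}\left(\hat{R}_{\text{sub}}(\tau_{0:i+1})-\hat{R}_{\text{sub}}(\tau_{0:i})\right)=\hat{R}_{\text{sub}}(\tau_{0:t+h})-\hat{R}_{\text{sub}}(\tau_{0:t}),
\end{equation}
where the boundary case $i=0$ is covered by the convention $\hat{R}_{\text{sub}}(\varnothing)=0$. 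Taking $\mathbb{E}_{\rho^\pi}$ of both sides rewrites the right-hand side of the theorem as $\mathbb{E}_{\rho^\pi}\big[\sum_{i=t}^{t+h-1}\hat{R}_d(\tau_{0:i},s_i,a_i)\big]$, so it only remains to prove that $\mathbb{E}_{\rho^\pi}[\hat{r}^\pi(s_i,a_i)]=\mathbb{E}_{\rho^\pi}[\hat{R}_d(\tau_{0:i},s_i,a_i)]$ for each index $i$, after which linearity of expectation closes the loop.

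The crux is to recognize $\Gamma^\pi_i(\cdot\mid s_i)$ as exactly the conditional law, under the on-policy trajectory distribution, of the preceding sub-trajectory $\tau_{0:i}$ given the current state $s_i$. This is Bayes' rule applied to the factorization used to define $\mathcal{T}_i^\pi$: the joint density of $(\tau_{0:i},s_i)$ under $\rho^\pi$ is $\mathcal{T}_i^\pi(\tau_{0:i})\,P(s_i\mid s_{i-1},a_{i-1})$, its $s_i$-marginal is $\rho^\pi(s_i)$, and their ratio is $\Gamma^\pi_i(\tau_{0:i}\mid s_i)$. Because $a_i\sim\pi(\cdot\mid s_i)$ depends only on $s_i$, the variable $a_i$ is conditionally independent of $\tau_{0:i}$ given $s_i$, so conditioning additionally on $a_i$ leaves this law unchanged. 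Therefore the definition of $\hat{r}^\pi$ can be read as a genuine conditional expectation,
\begin{equation}
\hat{r}^\pi(s_i,a_i)=\mathbb{E}_{\tau_{0:i}\sim\Gamma^\pi_i(\cdot\mid s_i)}\left[\hat{R}_d(\tau_{0:i},s_i,a_i)\right]=\mathbb{E}_{\rho^\pi}\left[\hat{R}_d(\tau_{0:i},s_i,a_i)\,\middle|\,s_i,a_i\right],
\end{equation}
and the tower property gives $\mathbb{E}_{\rho^\pi}[\hat{r}^\pi(s_i,a_i)]=\mathbb{E}_{\rho^\pi}[\hat{R}_d(\tau_{0:i},s_i,a_i)]$. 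Summing over $i$ from $t$ to $t+h-1$ and invoking the telescoping identity above yields the claim.

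Since the theorem is essentially a bookkeeping argument, I do not expect a genuine obstacle, but the one point that needs care is keeping the symbol $\mathbb{E}_{\rho^\pi}$ coherent across the three scales at which it appears: as an expectation over the step-$i$ state-action occupancy $\rho^\pi(s_i)\pi(a_i\mid s_i)$ on the left, over the augmented triple $(\tau_{0:i},s_i,a_i)$ in the intermediate step, and over whole trajectories in the telescoping identity. Making the reduction airtight therefore amounts to checking that marginalizing the full on-policy trajectory measure down to these lower-dimensional marginals produces exactly the densities written above; once that is pinned down, the conditional-independence observation and the tower property make the rest immediate.
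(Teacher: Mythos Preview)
Your proof is correct and is essentially the paper's argument phrased in probabilistic language: the paper writes out $\mathbb{E}_{\rho^\pi}[\hat{r}^\pi(s_i,a_i)]$ as the explicit sum $\sum_{s_i,a_i}\rho^\pi(s_i)\pi(a_i\mid s_i)\sum_{\tilde\tau_{0:i}}\Gamma^\pi_i(\tilde\tau_{0:i}\mid s_i)\hat{R}_d(\tilde\tau_{0:i},s_i,a_i)$, simplifies each of the two $\hat{R}_{\text{sub}}$ pieces to $\sum_{\tilde\tau}\mathcal{T}^\pi_{\cdot}(\tilde\tau)\hat{R}_{\text{sub}}(\tilde\tau)$, and then telescopes the resulting sums over $i$. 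This is exactly your tower-property step followed by your pathwise telescoping, just with the sums unfolded by hand rather than invoking the law of total expectation and conditional independence by name.
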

\begin{proof}
See Appendix A.1.
\end{proof}
This theorem states that the sum of any consecutive $h$-step sequence of $\hat{r}^\pi$ is the difference of two sub-trajectory rewards that differ by $h$ steps, in expectation. In particular, letting $t=0$ and $h=T$, it becomes
\begin{equation}
\mathbb{E}_{\rho^\pi}\left[\sum_{t=0}^{T-1}\hat{r}^\pi(s_{t},a_{t})\right]\approx\mathbb{E}_{\rho^\pi}\left[R_{\text{ep}}(\tau_{0:T})\right],
\end{equation}
which reveals that $\hat{r}^\pi$ is one of the return-equivalent proxy rewards of the original MDP. Therefore, the goal of policy optimization does not change while using $\hat{r}^\pi$ as the immediate guidance.

Next we aim to show that using the proxy reward $\hat{r}^{\pi}(s_t,a_t)$ to learn the proxy state-action value function defined as
\begin{equation}
    \hat{Q}^\pi(s_t,a_t)=\mathbb{E}_{P,\pi}\left[\left.\sum_{i=0}^{T-t-1}\hat{r}^\pi(s_{t+i},a_{t+i})\right|s_t,a_t\right],
\end{equation}
nearly leads to the optimal policy in the original MDP. The necessary condition that $\hat{Q}^\pi(s_t,a_t)$ needs to satisfy for the optimal policy consistency is provided by Lemma \ref{theorem2}.
\begin{lemma}
\label{theorem2}
For any finite MDP with a state-action value function $Q^\pi(s,a)$ defined based on the unobservable step-wise reward function $r(s,a)$, any proxy state-action value function $\hat{Q}^\pi(s,a)$ that satisfies
\begin{equation}
\label{cond}
    \hat{Q}^\pi(s,a)=Q^\pi(s,a)+\delta(s),
\end{equation}
where $\delta(s)$ is any function defined over the state space, will lead to the same optimal policy as $Q^\pi$, whether using value-based or policy gradient methods.
\end{lemma}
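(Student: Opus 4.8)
The plan is to handle the two algorithm families separately, since in both cases the claim reduces to the single elementary fact that $\delta(s)$ does not depend on the action argument. I would state this reduction up front and then discharge it once for value-based methods and once for policy gradient methods, noting at the end that every step is local either in the state or in the time index, so the finite-horizon (and possibly non-stationary) structure of the MDP causes no complication.

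For value-based methods, I would argue at the level of the greedy improvement operator. The policy extracted from a state-action value function at state $s$ is $\pi'(s)\in\arg\max_a \hat{Q}^\pi(s,a)$, and because $\hat{Q}^\pi(s,a)=Q^\pi(s,a)+\delta(s)$ with $\delta(s)$ constant in $a$, we have $\arg\max_a \hat{Q}^\pi(s,a)=\arg\max_a Q^\pi(s,a)$; more generally the ordering of actions by value at each fixed state is preserved. Hence, starting from any policy, a policy-improvement step carried out with $\hat{Q}$ returns exactly the same successor policy as the one carried out with $Q$, and by induction the whole policy-iteration sequence, and therefore its limit, is unchanged. The same observation covers any algorithm whose behaviour (greedy, $\varepsilon$-greedy, or any rule depending only on the per-state action ordering) is a function of the learned state-action values, so acting on $\hat{Q}^\pi$ is indistinguishable from acting on $Q^\pi$.

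For policy gradient methods, I would invoke the policy gradient theorem and the standard baseline argument. Using $\hat{Q}^\pi$ as the critic and expanding by linearity,
\[
\mathbb{E}_{\rho^\pi}\!\left[\sum_t \nabla_\theta \log\pi_\theta(a_t|s_t)\,\hat{Q}^\pi(s_t,a_t)\right]
= \mathbb{E}_{\rho^\pi}\!\left[\sum_t \nabla_\theta \log\pi_\theta(a_t|s_t)\,Q^\pi(s_t,a_t)\right]
+ \mathbb{E}_{\rho^\pi}\!\left[\sum_t \nabla_\theta \log\pi_\theta(a_t|s_t)\,\delta(s_t)\right].
\]
I would then show the extra term vanishes: conditioning each summand on $s_t$ and using $\mathbb{E}_{a\sim\pi_\theta(\cdot|s_t)}[\nabla_\theta\log\pi_\theta(a|s_t)]=\sum_a \nabla_\theta \pi_\theta(a|s_t)=\nabla_\theta 1=0$, so the whole correction is zero. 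Thus the gradient computed from $\hat{Q}^\pi$ equals the one computed from $Q^\pi$, the optimization trajectories coincide, and the two critics induce the same set of stationary (in particular optimal) policies.

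The argument is essentially routine once phrased this way, so I do not expect a deep obstacle; the only care needed is in pinning down precisely what "value-based" and "policy gradient" mean here, so that the two cases above genuinely exhaust the intended algorithm classes — specifically, making explicit that the value-based part uses only the per-state action ordering induced by the value function, and that the baseline identity in the policy gradient part holds at each time index separately, so neither argument is affected by the finite horizon or by any time-dependence hidden in $Q^\pi$ and $\delta$.
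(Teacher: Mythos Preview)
Your proposal is correct and follows essentially the same route as the paper: for value-based methods you use the invariance of $\arg\max_a$ under the state-only shift $\delta(s)$, and for policy gradient methods you use the standard baseline identity $\mathbb{E}_{a\sim\pi_\theta(\cdot|s)}[\nabla_\theta\log\pi_\theta(a|s)]=0$ to kill the extra term. The paper's proof is exactly this pair of observations, without the additional remarks on action orderings, $\varepsilon$-greedy rules, or the per-time-step locality you mention.
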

\begin{proof}
See Appendix A.2.
\end{proof}
This lemma indicates that once the difference of $\hat{Q}^\pi(s,a)$ and $Q^\pi(s,a)$ is only related to the state $s$, the original optimal policy can be guaranteed even though using $\hat{Q}^\pi(s,a)$ to optimize the policy. In order to verify that $\hat{Q}^\pi(s,a)$ can satisfy the condition \eqref{cond}, we further extend Theorem \ref{theorem1} to the conditional form.
\begin{theorem}
\label{theorem3}
Given any policy $\pi$, the following equation holds for any $s_t\in\mathcal{S}$, $a_t\in\mathcal{A}$ at any time step $t\in\{0,\ldots,T-1\}$, that is,
\begin{equation}
\mathbb{E}_{P,\pi}\left[\left.\sum_{i=1}^{T-t-1}\hat{r}^\pi(s_{t+i},a_{t+i})\right|s_t,a_t\right]
=\mathbb{E}_{P,\pi}\left[\left.\hat{R}_\text{sub}(\tau_{0:T})-\hat{R}_\text{sub}(\tau_{0:t+1})\right|s_t,a_t\right].
\end{equation}
\end{theorem}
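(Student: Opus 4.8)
The plan is to extend the telescoping argument used for Theorem~\ref{theorem1} (Appendix A.1) to the conditional measure $\mathbb{E}_{P,\pi}[\,\cdot\mid s_t,a_t]$. First I would split the left-hand side by linearity as $\sum_{j=t+1}^{T-1}\mathbb{E}_{P,\pi}[\hat{r}^\pi(s_j,a_j)\mid s_t,a_t]$ and substitute the definition \eqref{stepr} of $\hat{r}^\pi$, writing $\Gamma^\pi_j(\tau_{0:j}\mid s_j)=\mathcal{T}^\pi_j(\tau_{0:j})P(s_j\mid s_{j-1},a_{j-1})/\rho^\pi(s_j)$. Multiplying through by the forward-rollout weights $\prod_{k=t+1}^{j}P(s_k\mid s_{k-1},a_{k-1})\pi(a_k\mid s_k)$ that make up $\mathbb{E}_{P,\pi}[\,\cdot\mid s_t,a_t]$, one uses the same algebraic identity as in the proof of Theorem~\ref{theorem1}, namely $\mathcal{T}^\pi_j(\tau_{0:j})P(s_j\mid s_{j-1},a_{j-1})\pi(a_j\mid s_j)=\mathcal{T}^\pi_{j+1}(\tau_{0:j+1})$, to collapse the prefix integral and recognize each summand as a difference of appropriately conditioned sub-trajectory rewards, $\mathbb{E}[\hat{R}_{\text{sub}}(\tau_{0:j+1})]-\mathbb{E}[\hat{R}_{\text{sub}}(\tau_{0:j})]$. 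Summing over $j$ then telescopes to $\mathbb{E}_{P,\pi}[\hat{R}_{\text{sub}}(\tau_{0:T})\mid s_t,a_t]-\mathbb{E}_{P,\pi}[\hat{R}_{\text{sub}}(\tau_{0:t+1})\mid s_t,a_t]$, which is the right-hand side.

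The new ingredient compared with Theorem~\ref{theorem1} is the bookkeeping around the conditioning. I would invoke the Markov property to factor the conditional law of a full trajectory given $(s_t,a_t)$ as (the law $\Gamma^\pi_t(\cdot\mid s_t)$ of the prefix $\tau_{0:t}$) $\times$ (the forward rollout from $(s_t,a_t)$): this is exactly what lets a whole-trajectory quantity such as $\hat{R}_{\text{sub}}(\tau_{0:T})$ be conditioned on $(s_t,a_t)$ in a way consistent with the implicit prefix-averaging built into $\hat{r}^\pi$. An equivalent route is backward induction on $t$: for $t=T-1$ both sides are zero; for the inductive step one peels off the first summand $\hat{r}^\pi(s_{t+1},a_{t+1})$, reindexes the remainder and applies the hypothesis at $t+1$, and uses $\mathbb{E}[\hat{R}_{\text{sub}}(\tau_{0:t+2})\mid s_{t+1},a_{t+1}]=\hat{r}^\pi(s_{t+1},a_{t+1})+\mathbb{E}[\hat{R}_{\text{sub}}(\tau_{0:t+1})\mid s_{t+1}]$ so that the $\hat{r}^\pi$ terms cancel, leaving a one-step identity to be verified after taking the expectation over $(s_{t+1},a_{t+1})\mid(s_t,a_t)$.

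The step I expect to be the main obstacle is precisely this reconciliation. In the unconditional Theorem~\ref{theorem1}, the weight $\rho^\pi(s_j)$ from the on-policy state distribution cancels cleanly against the $1/\rho^\pi(s_j)$ inside $\Gamma^\pi_j$; once we condition on $(s_t,a_t)$, the corresponding weight on $s_j$ is the forward reaching-probability $\Pr(s_j\mid s_t,a_t)$, which does \emph{not} cancel $\rho^\pi(s_j)$, because the posterior $\Gamma^\pi_j(\cdot\mid s_j)$ over prefixes has no memory of the earlier constraint that the path passed through $(s_t,a_t)$. Making the argument go through therefore requires either showing that the mismatched weights telescope away over $j$ once the sub-trajectory rewards are summed, or appealing to the (near-)return-equivalence of $\hat{R}_{\text{sub}}$ encoded in \eqref{strd} to absorb the residual; identifying exactly which property of $\hat{R}_{\text{sub}}$ is doing the work — and whether the resulting identity is exact or holds only up to the decomposition error — is the delicate part of the proof.
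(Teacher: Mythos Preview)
Your plan is exactly the route the paper takes in Appendix~A.3: unfold the conditional expectation using the forward reaching-probability $\mu^\pi(s_{t+i}\mid s_t,a_t)$, substitute the definition~\eqref{stepr} of $\hat{r}^\pi$, split $\hat{R}_d$ into the two $\hat{R}_{\text{sub}}$ pieces, and telescope. The paper packages the product $\mu^\pi(s_{t+i}\mid s_t,a_t)\,\pi(a_{t+i}\mid s_{t+i})\,\Gamma^\pi_{t+i}(\tilde\tau_{0:t+i}\mid s_{t+i})$ as a single measure $\Upsilon^\pi_{t+i+1}(\cdot\mid s_t,a_t)$, declares the two sums ``partial-cancellable,'' and reads off the endpoints $\Upsilon^\pi_T$ and $\Upsilon^\pi_{t+1}$, which it then \emph{identifies} with the conditional expectations on the right-hand side. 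Neither the return-equivalence~\eqref{strd} nor any other property of $\hat{R}_{\text{sub}}$ is invoked; the cancellation is treated as exact.

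The obstacle you single out in your last paragraph is therefore precisely the step the paper does not justify. For the telescoping to go through one needs, for each intermediate length $k$, that the weight on $\tilde\tau_{0:k}$ coming from the ``first term at $i=k-t-1$'' equals the weight coming from the ``second term at $i=k-t$''; unpacking both via $\Gamma^\pi$ and $\mathcal{T}^\pi$ shows this amounts to $\mu^\pi(\tilde s_{k-1}\mid s_t,a_t)/\rho^\pi(\tilde s_{k-1})=\sum_{s_k}\mu^\pi(s_k\mid s_t,a_t)P(s_k\mid \tilde s_{k-1},\tilde a_{k-1})/\rho^\pi(s_k)$, which is not an identity of the MDP in general. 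So your diagnosis that the prefix posterior $\Gamma^\pi_j(\cdot\mid s_j)$ ``has no memory'' of the constraint through $(s_t,a_t)$ is the right place to be uneasy; the paper's proof simply asserts the cancellation without confronting it. In short, you have reconstructed the intended argument faithfully, and the gap you worry about is the paper's gap as well, not an artifact of your approach.
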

\begin{proof}
See Appendix A.3.
\end{proof}
By applying Theorem \ref{theorem3}, it is observed that
\begin{equation}
\begin{aligned}
\hat{Q}^\pi(s_t,a_t)
=&\hat{r}^\pi(s_t,a_t)+\mathbb{E}_{P,\pi}\left[\left.\hat{R}_\text{sub}(\tau_{0:T})-\hat{R}_\text{sub}(\tau_{0:t+1})\right|s_t,a_t\right]\\
\approx&\mathbb{E}_{P,\pi}\left[\left.\sum_{i=0}^{T-1}r(s_i,a_i)-\hat{R}_\text{sub}(\tau_{0:t})\right|s_t,a_t\right]\\
\approx&Q^\pi(s_t,a_t)+\mathbb{E}_{P,\pi}\left[\left.\sum_{i=0}^{t-1}r(s_i,a_i)-\hat{R}_\text{sub}(\tau_{0:t})\right|s_t\right],
\end{aligned}
\end{equation}
which conforms the form of the condition \eqref{cond} with $\delta(s_t)=\mathbb{E}_{P,\pi}\left[\left.\sum_{i=0}^{t-1}r(s_i,a_i)-\hat{R}_\text{sub}(\tau_{0:t})\right|s_t\right]$. Hence $\hat{Q}^\pi$ is an effective substitute of $Q^\pi$ to optimize the policy, in the case of original reward $r$ being inaccessible.

\begin{algorithm}[t]
\caption{Diaster}
\label{diaster_algo}
\begin{algorithmic}[1]
\STATE {\bfseries Input:} Neural parameters $\psi$, $\phi$, replay buffer $\mathcal{D}$, learning rate $\lambda_{\hat{R}}$, $\lambda_{\hat{r}}$, batch size $B$, and RL algorithm RL-ALGO containing the initial policy $\pi_\theta$.
\FOR{$N$ episodes}
\STATE Collect a trajectory $\tau_{0:T}$ using the policy $\pi_\theta$.
\STATE Store $\tau_{0:T}$ and its feedback $R_{\text{ep}}(\tau_{0:T})$ in $\mathcal{D}$.
\FOR{$M$ batches}
\STATE Sample $B$ trajectories $\{\tau_{0:T}^i\}_{i=0}^{B-1}$ from $\mathcal{D}$.
\STATE Sample $B$ scalars $\{c^i\}_{i=0}^{B-1}$ from $\{0,\ldots,T-1\}$.
\STATE $\psi\leftarrow\psi-\nabla_\psi\frac{1}{B}\sum_{i=0}^{B-1}l_{\hat{R}}(\psi,\tau_{0:T}^i,c^i)$ by Eq.\eqref{sub_loss}.
\STATE $\phi\leftarrow\phi-\nabla_\phi\frac{1}{B}\sum_{i=0}^{B-1}l_{\hat{r}}(\phi,\tau_{0:c^i+1}^i)$ by Eq.\eqref{step_loss}.
\STATE Sample $B$ transitions $\{(s_i,a_i,s_{i+1})\}_{i=0}^{B-1}$ from $\mathcal{D}$.
\STATE Use RL-ALGO to optimize the policy $\pi_\theta$ with $\{(s_i,a_i,\hat{r}^\phi(s_i,a_i),s_{i+1})\}_{i=0}^{B-1}$.
\ENDFOR
\ENDFOR
\end{algorithmic}
\end{algorithm}

\subsection{Practical Implementation of Diaster}
The primary problem is how to obtain an implicitly assigned sub-trajectory reward that satisfies \eqref{strd}. The error of the approximation \eqref{strd} directly decides the optimality of the policy learned from the proxy rewards. Considering to extract the temporal structures contained in sub-trajectories, we use an RNN with a GRU \cite{gru} cell, parameterized by $\psi$, to represent the sub-trajectory reward function $\hat{R}_{\text{sub}}^\psi$. We train $\hat{R}_{\text{sub}}^\psi$ by minimizing the expected loss of episodic return decomposition:
\begin{equation}
\mathcal{L}_{\hat{R}}(\psi)=\mathop{\mathbb{E}}_{\tau_{0:T}\in\mathcal{D}}\left[
\mathop{\mathbb{E}}\limits_{c\sim U(\{0,\ldots,T-1\})}\left[l_{\hat{R}}(\psi,\tau_{0:T},c)\right]\right],
\end{equation}
with the replay buffer $\mathcal{D}$, and
\begin{equation}
\label{sub_loss}
l_{\hat{R}}(\psi,\tau_{0:T},c)=\left(\hat{R}_{\text{sub}}^\psi(\tau_{0:c})+\hat{R}_{\text{sub}}^\psi(\tau_{c:T})-R_{\text{ep}}(\tau_{0:T})\right)^2.
\end{equation}
The step-wise proxy reward function $\hat{r}^\phi$, with neural parameters $\phi$, is trained to predict the difference of sub-trajectory rewards. The expected loss is
\begin{equation}
\mathcal{L}_{\hat{r}}(\phi)=\mathop{\mathbb{E}}_{\tau_{0:T}\in\mathcal{D}}\left[
\mathop{\mathbb{E}}\limits_{t\sim U(\{0,\ldots,T-1\})}\left[l_{\hat{r}}(\phi,\tau_{0:t+1})\right]\right],
\end{equation}
where
\begin{equation}
\label{step_loss}
l_{\hat{r}}(\phi,\tau_{0:t+1})=\left(\hat{r}^\phi(s_t,a_t)-\left(\hat{R}_{\text{sub}}^\psi(\tau_{0:t+1})-\hat{R}_{\text{sub}}^\psi(\tau_{0:t})\right)\right)^2
\end{equation}
is the prediction loss for any given sub-trajectory $\tau_{0:t+1}=(s_0,a_0,\ldots,s_t,a_t)$. 

The complete algorithm of Diaster is described in Algorithm \ref{diaster_algo}, where RL-ALGO can be any off-policy RL algorithm. It is efficient to apply Diaster in tasks with an episodic-reward setting due to its easy implementation and few hyper-parameters requiring tuning.

\section{Experiments}
\label{exp}
In this section, we focus on four primary questions: 1) What is the step-wise proxy reward function learned through Diaster like? 2) How well does Diaster perform on RL benchmark tasks, in contrast to previous state-of-the-art episodic return decomposition methods? 3) What will happen if decomposing the episodic return into implicit rewards of more than two sub-trajectories? 4) Is it necessary to learn the step-wise proxy reward function?

\subsection{Experimental Setting}
We conduct a series of experiments on MuJoCo \cite{mujoco} and PointMaze \cite{pointmaze} benchmarks with the same episodic-reward setting as \cite{ircr, rrd}. Specifically, we modify the original environment to prevent the agent from accessing the instant reward. Instead, zero signals will be received by the agent at non-terminal states. The only feedback is the episodic reward $R_{\text{ep}}(\tau_{0:T})$ returned at the end of an episode. The episode is set up with a maximum trajectory horizon $T=1000$. Other environmental parameters are kept the same as the default.

During the following experiments, our Diaster is implemented based on SAC \cite{sac}, one of the state-of-the-art model-free RL algorithms in continuous control tasks. The algorithmic hyper-parameters are given in Appendix B.

\begin{figure*}[pt!]
    \centering
    \includegraphics[height=0.23\linewidth]{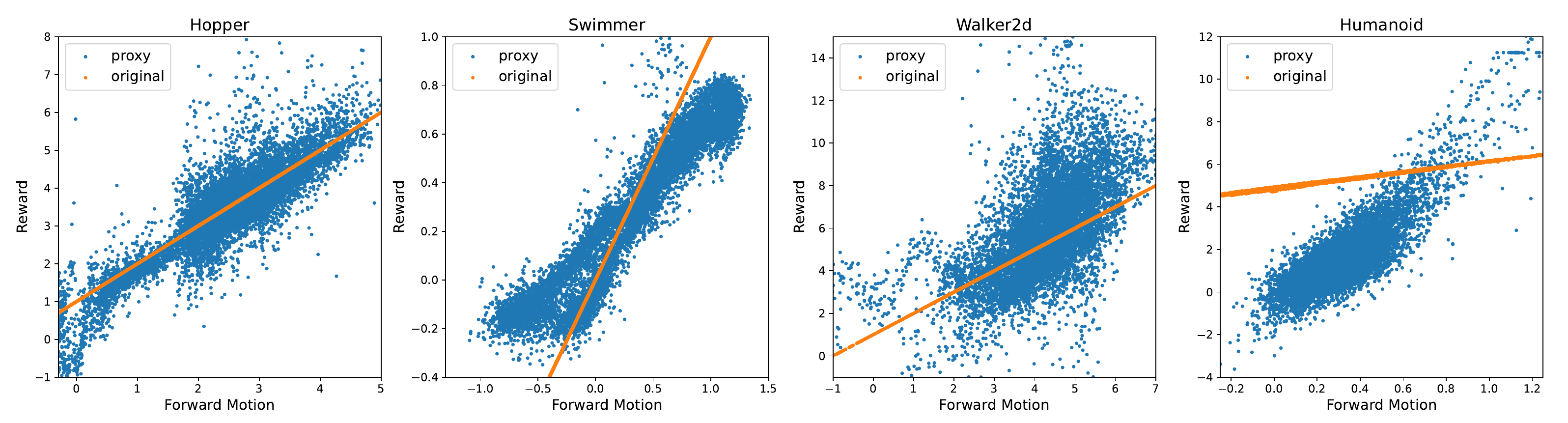}
    \caption{Proxy Reward Visualization. For each sampled pair of state-action $(s, a)$, we show the forward motion of the agent along with the learned proxy reward $\hat{r}^\phi(s,a)$ (blue point) and the original unobservable reward $r(s,a)$ (orange point).}
    \label{reward_exp}
\end{figure*}

\subsection{Proxy Reward Visualization}

We aim to visualize the step-wise proxy rewards learned through our Diaster on four MuJoCo tasks with the episodic-reward setting, including Hopper, Swimmer, Walker2d, and Humanoid. The principal goal of these environments is to let the robot move in the forward direction as fast as possible in the limited time horizon. After that, the episodic reward function mainly connects with the agent's forward motion. To coincide with the episodic goal, the proxy reward function has to be related to the forward motion at any time step. Therefore, revealing the relationship between the learned proxy rewards and the agent's forward motion can indicate the effectiveness of return decomposition. We sample several state-action pairs in the environment and show their proxy rewards along with the step-wise displacements of forward motion. What's more, we show the original environmental rewards for reference. The results are demonstrated in Figure \ref{reward_exp}.

We observe that the point with more distance of forward motion tends to have a greater proxy reward. The nearly monotonic tendency shows how the learned proxy rewards depend on the forward motion and indicates that Diaster can discover the latent attribution of the episodic reward function and make an effective decomposition. Intuitively, using such a proxy reward function can encourage the agent to step forward, consistent with the original goal of MuJoCo tasks.

\subsection{Performance Comparison}
We evaluate our Diaster on seven MuJoCo tasks with the episodic-reward setting, including InvertedPendulum, Hopper, Swimmer, Walker2d, Ant, Humanoid, and HumanoidStandup. Three existing episodic return decomposition methods are selected for comparison:
\begin{itemize}
    \item \textbf{RUDDER} \cite{rudder} trains an LSTM network to predict the episodic reward at every time step. The step-wise proxy reward is assigned by the difference between return predictions of two consecutive states.
    \item \textbf{IRCR} \cite{ircr} considers setting the proxy reward as the normalized value of the corresponding episodic return instead of learning any parametric step-wise reward function.
    \item \textbf{RRD} \cite{rrd} carries out return decomposition on randomly sampled short subsequences of trajectories, which is adaptive to long-horizon tasks and achieves state-of-the-art performance.
\end{itemize}

\begin{figure*}[t!]
    \centering
    \includegraphics[width=0.8\linewidth]{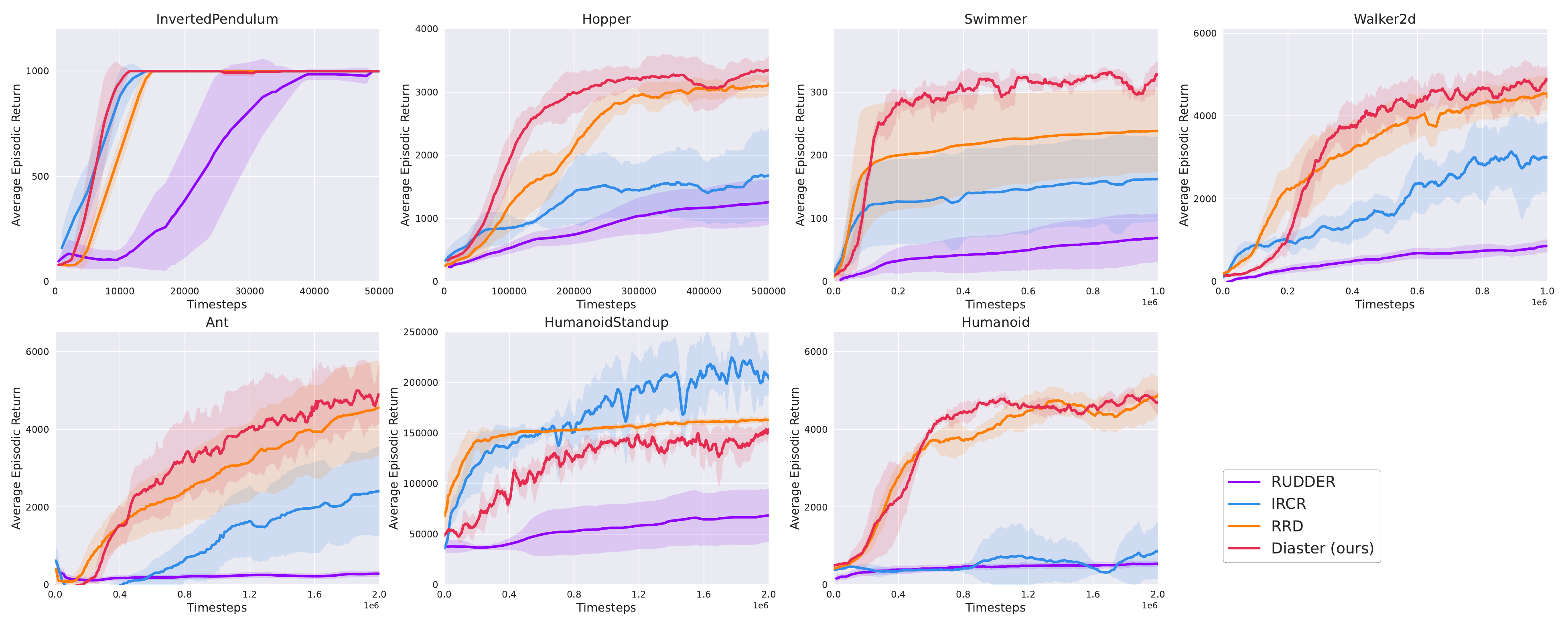}
    \caption{Learning curves of our Diaster (red) and other three baselines on MuJoCo continuous control tasks. The solid lines indicate the mean and shaded areas indicate the standard error over five different random seeds. Each evaluation, taken every 1,000 environmental steps, calculates the average return over ten episodes.}
    \label{performance}
\end{figure*}

Figure \ref{performance} shows the learning curves of our Diaster (red) and the other three baselines. By contrast, RUDDER is no match for Diaster in all seven tasks. As for RRD and IRCR, it can be observed that Diaster shows a better sample efficiency than these two algorithms in most of the environments. Only in the HumanoidStandup task, Diaster has a slight disadvantage compared to RRD and IRCR. Although unable to demonstrate an overwhelming superiority, these results show that Diaster has both high sample efficiency and competitive performance on MuJoCo benchmarks.

\subsection{On Number of Cut Points}
We propose to cut an entire trajectory at any cut point and decompose the episodic return into two sub-trajectory rewards in the above. A natural question is what will happen if decomposing the episodic return into implicit rewards of more than two sub-trajectories. In this subsection, we conduct an ablation study to show how Diaster performs while changing the number of cut points. 

With any $m$ sampled cut points $\{c_i\}_{i=0}^{m-1}$ satisfying that $0<c_0<c_1<\cdots<c_{m-1}<T$, the condition \eqref{strd} for the sub-trajectory reward function can be extended as
\begin{equation}
    \hat{R}_{\text{sub}(\tau_{0:c_0})}+\sum_{i=1}^{m-1}\hat{R}_{\text{sub}(\tau_{c_{i-1}:c_i})}+\hat{R}_{\text{sub}(\tau_{c_{m-1}:T})}\approx R_{\text{ep}}(\tau_{0:T}),
\end{equation}
while $\hat{R}_{\text{sub}}(\varnothing)$ and $\hat{R}_{\text{sub}}(\tau_{0:T}) \approx R_{\text{ep}}(\tau_{0:T})$ none the less hold. If $m=0$, this formulation will just learn a function to fit $R_{\text{ep}}(\tau_{0:T})$, without any attribution of the episodic reward. If $m=T-1$, this formulation will equal step-wise return decomposition \eqref{rd}, which is incapable of bringing in any temporal structural representation. Any $m$ in $\{1,2,\ldots,T-2\}$ can achieve a trade-off between representation and attribution of the episodic reward function.

\begin{figure}[b!]
    \centering
    \subfigure{
        \centering
        \includegraphics[width=0.2\linewidth]{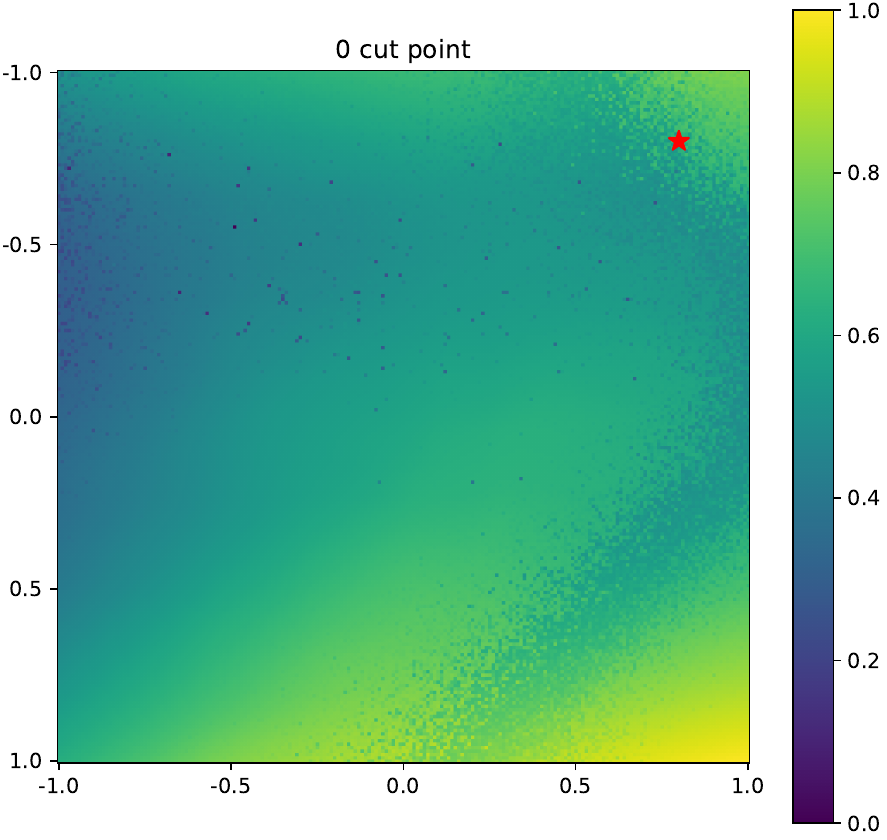}
        \label{cut0}
    }%
    \subfigure{
        \centering
        \includegraphics[width=0.2\linewidth]{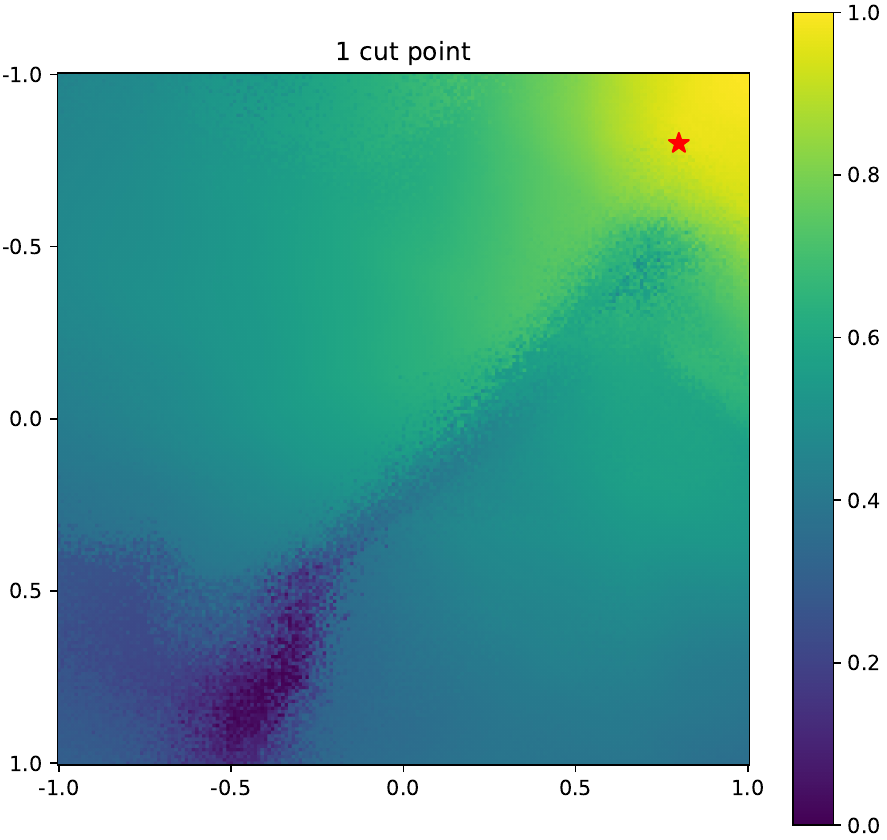}
        \label{cut1}
    } \\
    \subfigure{
        \centering
        \includegraphics[width=0.2\linewidth]{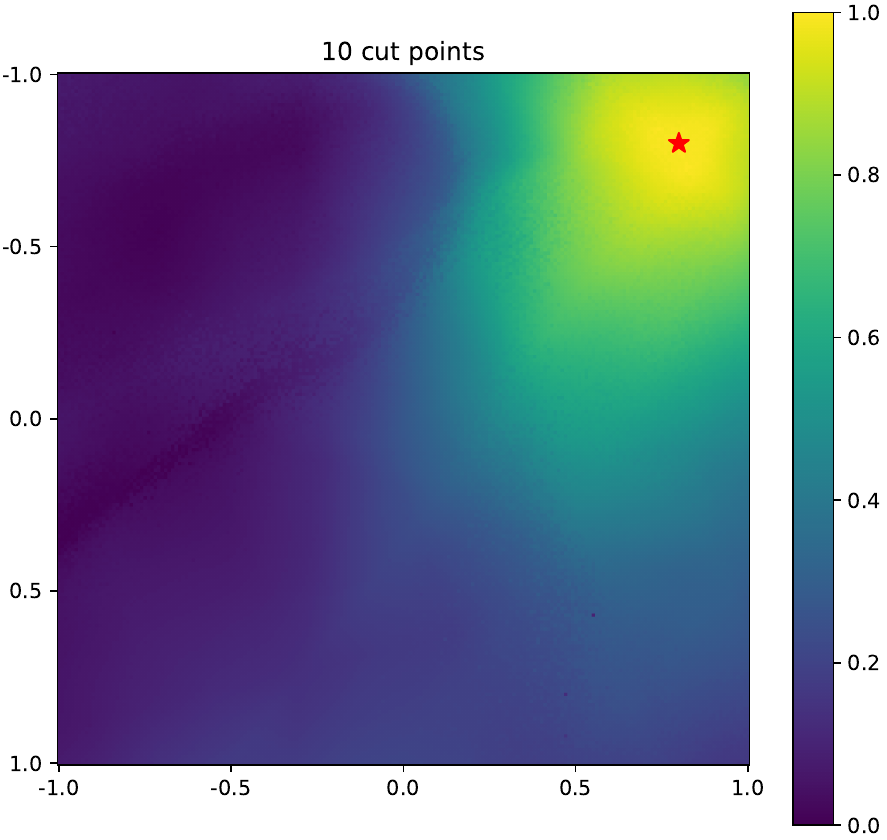}
        \label{cut10}
    }%
    \subfigure{
        \centering
        \includegraphics[width=0.2\linewidth]{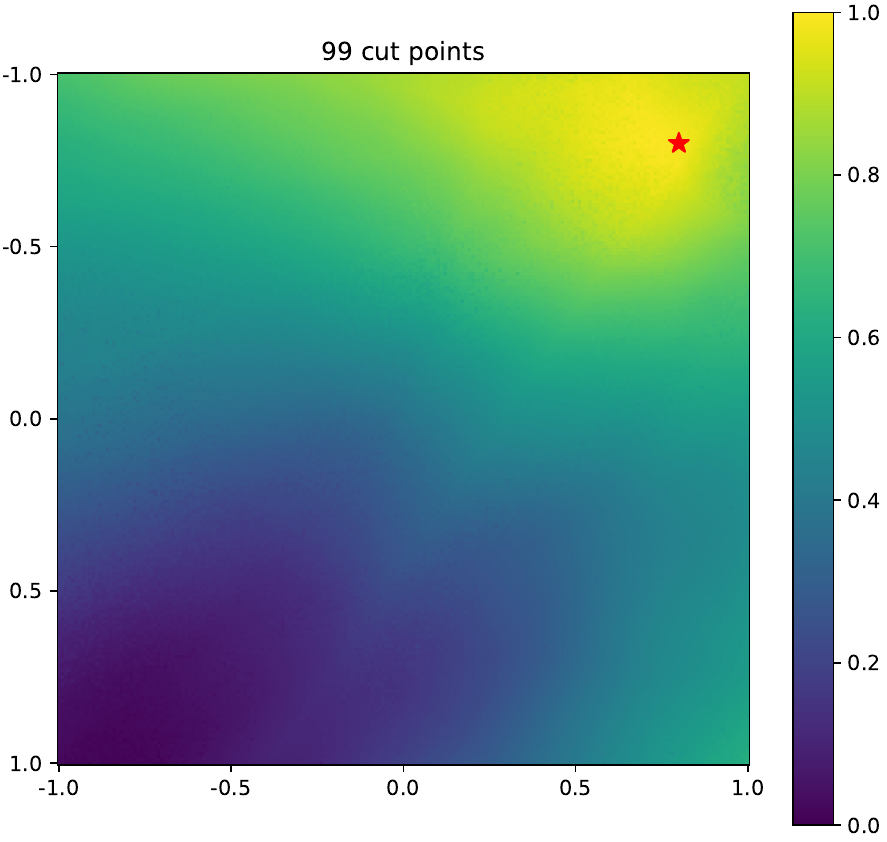}
        \label{cut99}
    }%
    \caption{Heatmap of expected cumulative proxy rewards.}
    \label{heatm}
\end{figure}

First, we visualize the influence of $m$ on the learned proxy value function. For the sake of explainability, the experiment is conducted on one PointMaze task (PointMaze\_UMaze), where a 2-Degree-of-Freedom ball force-actuated in the cartesian directions x and y is aiming to reach a target goal in a closed maze. We plot the heatmap of the normalized expected cumulative proxy rewards in the future for different numbers of cut points, as shown in Figure \ref{heatm}. The red star is the goal on the map. The closer the position (x, y) is to the goal, the greater its expected proxy value tends to be. The tendency of 10 cut points is the clearest, indicating that a proper $m$ between $0$ and $T-1$ can yield reasonable proxy value estimation since the attribution and representation of the episodic return are both considered.

\begin{figure*}[t!]
    \centering
    \includegraphics[width=0.8\linewidth]{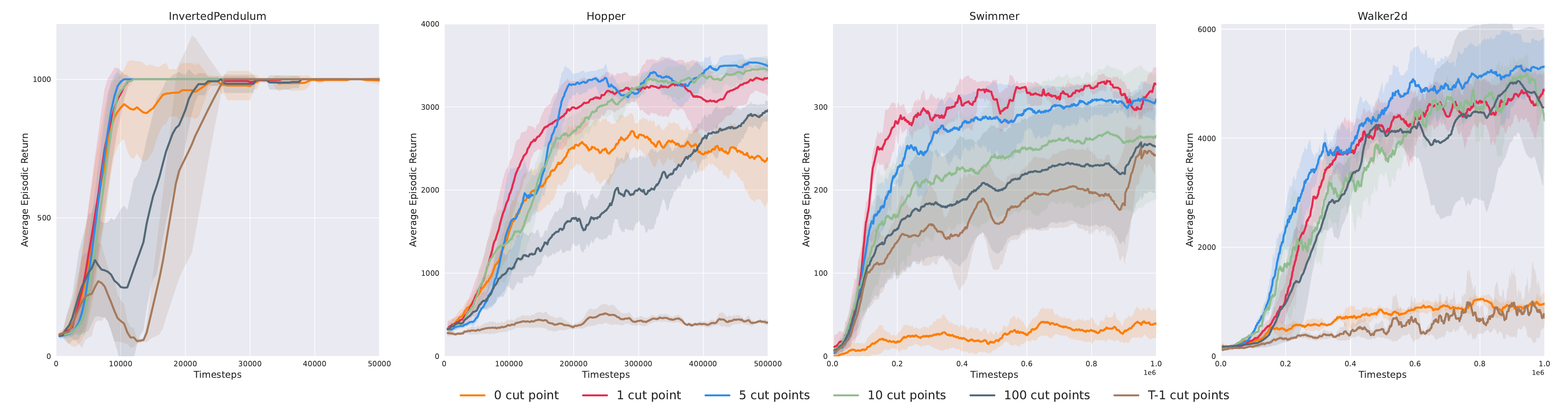}
    \caption{Learning curves of our Diaster with a set of choices of the cut points' number, $m\in\{0,1,5,10,100,T-1\}$, on MuJoCo continuous control tasks. The solid lines indicate the mean and shaded areas indicate the standard error over five different random seeds. Each evaluation, taken every 1,000 environmental steps, calculates the average return over ten episodes.}
    \label{ablation}
\end{figure*}

Next, we evaluate Diaster with a set of choices of the cut points' number, $m\in\{0,1,5,10,100,T-1\}$, on four MuJoCo tasks with the episodic-reward setting, including InvertedPendulum, Hopper, Swimmer, and Walker2d. The results are demonstrated in Figure \ref{ablation}. The performance increases first and then decreases as the number of cut points $m$ increases from $0$ to $T-1$. $m=0$ and $m=T-1$ are always unable to achieve an acceptable performance, because $m=0$ restricts the attribution capability while $m=T-1$ restricts the long-term representation. Although the sensitivity of this hyper-parameter depends on the task, an appropriate integer $m$ between $0$ and $T-1$ can better approximate the return decomposition object and achieve better performance since it can balance attribution and representation of the episodic return. In all the experimental tasks, $m=1$ or $m=5$ performs the best, which suggests that a relatively small $m$ is befitting.

In conclusion, attribution and representation are both necessary for long-term episodic return decomposition. The lack of either of them would lead to poor performance. The trade-off between attribution and representation achieved by our Diaster improves the sample efficiency of RL in the tasks with episodic-reward settings.

\begin{figure*}[t!]
    \centering
    \includegraphics[width=0.8\linewidth]{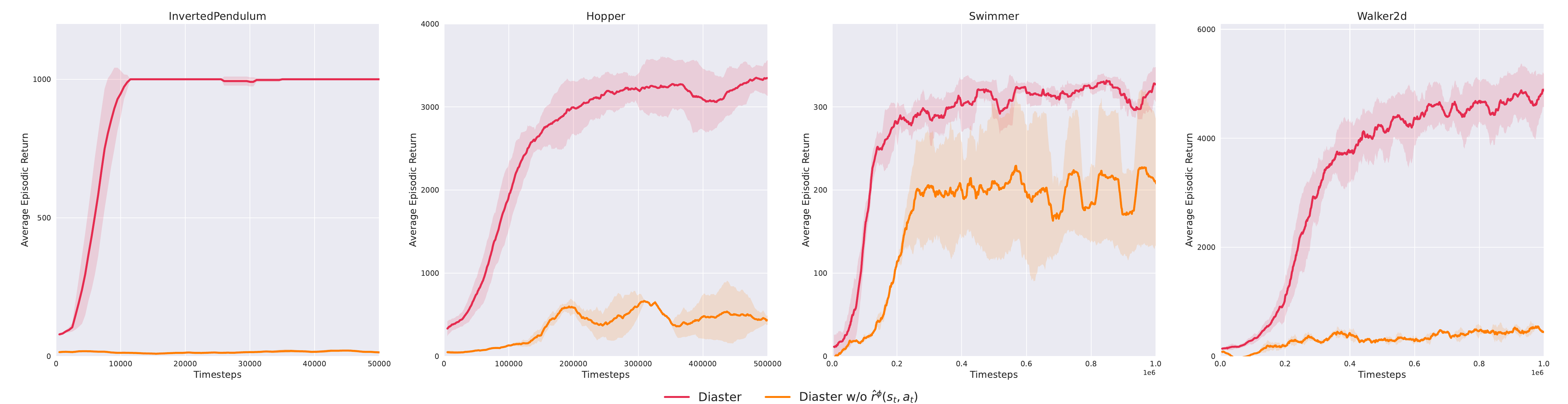}
    \caption{Learning curves of our Diaster, with and without learning the step-wise proxy reward function, respectively.}
    \label{ablation2}
\end{figure*}

\subsection{On Step-wise Proxy Reward Function Learning}
We propose to learn the sub-trajectory reward function $\hat{R}_{\text{sub}}^\psi$ by minimizing \eqref{sub_loss} then learn the step-wise proxy reward function $\hat{r}^\phi$ by minimizing \eqref{step_loss} and use $\hat{r}^\phi$ to provide incentives for the agent. It's also feasible to directly use the difference of $\hat{R}_{\text{sub}}^\psi$ to optimize the policy for each trajectory. In this subsection, we conduct an ablation study to show how Diaster performs without learning $\hat{r}^\phi$.

We evaluate Diaster, with and without learning the step-wise proxy reward function $\hat{r}^\phi$, respectively, on four MuJoCo tasks with the episodic-reward setting, including InvertedPendulum, Hopper, Swimmer, and Walker2d. The results are demonstrated in Figure \ref{ablation2}. We point out that $\hat{R}_{\text{sub}}^\psi(\tau_{0:t})$ is conditioned on the sub-trajectory $\tau_{0:t}$ and the difference of $\hat{R}_{\text{sub}}^\psi$ is non-Markovian. Without a Markovian reward signal, the policy optimization becomes non-stationary. Therefore, the performance deteriorates after removing the step-wise rewards in all four tasks.

\section{Conclusion}
\label{conclu}
In this paper, we present a new episodic return decomposition approach for episodic-reward setting where the only accessible feedback is the episodic reward obtained at the end of an episode. Specifically, we propose to cut an entire trajectory at any time step and decompose the episodic return into two sub-trajectory rewards. Then the step-wise reward can be straightforwardly obtained by differencing the implicitly assigned sub-trajectory reward in expectation. The new method called Diaster not only introduces temporal structural representations into the episodic reward function but also properly attributes the return to different parts of a given trajectory. The theoretical results verify that the step-wise proxy reward function learned through Diaster is return-equivalent to the original MDP in expectation and capable of guiding the policy to be nearly optimal. The empirical results show that Diaster can provide effective proxy rewards for RL algorithms and outperform previous state-of-the-art return decomposition methods in terms of both sample efficiency and performance.

\section{Acknowledgments}
This work is supported by the National Science Foundation of China (61921006), and The Major Key Project of PCL (PCL2021A12).


\bibliography{aaai24}
\bibliographystyle{unsrt}

\clearpage
\appendix
\section{A. Proofs of the Theoretical Results}
\subsection{A.1. Proof of Theorem \ref{theorem1}}
\label{theorem1_proof}
Any environmental state $s_t$ is sampled according to the on-policy distribution $\rho^\pi(s_t)$ and the action $a_t$ is chosen by the policy $\pi(a_t|s_t)$. We unfold the expectation-form as
\begin{equation}
\mathbb{E}_{\rho^\pi}\left[\sum_{i=t}^{t+h-1}\hat{r}^\pi(s_i,a_i)\right]
=\sum_{i=t}^{t+h-1}\sum_{s_i,a_i}\rho^\pi(s_i)\pi(a_i|s_i)\hat{r}^\pi(s_i,a_i).
\end{equation}
Replacing the step-wise proxy reward function $\hat{r}^\pi$ with its definition \eqref{stepr}, we obtain
\begin{equation}
\begin{aligned}
&\mathbb{E}_{\rho^\pi}\left[\sum_{i=t}^{t+h-1}\hat{r}^\pi(s_i,a_i)\right]\\
=&\sum_{i=t}^{t+h-1}\sum_{s_i,a_i}\rho^\pi(s_i)\pi(a_i|s_i)\sum_{\tilde{\tau}_{0:i}}\Gamma^\pi_i(\tilde{\tau}_{0:i}|s_i)\hat{R}_d(\tilde{\tau}_{0:i},s_i,a_i)\\
=&\sum_{i=t}^{t+h-1}\sum_{s_i,a_i}\rho^\pi(s_i)\pi(a_i|s_i)\sum_{\tilde{\tau}_{0:i}}\Gamma^\pi_i(\tilde{\tau}_{0:i}|s_i)\hat{R}_{\text{sub}}(\tilde{\tau}_{0:i},s_i,a_i)\\
&\quad\quad-\sum_{i=t}^{t+h-1}\sum_{s_i,a_i}\rho^\pi(s_i)\pi(a_i|s_i)\sum_{\tilde{\tau}_{0:i}}\Gamma^\pi_i(\tilde{\tau}_{0:i}|s_i)\hat{R}_{\text{sub}}(\tilde{\tau}_{0:i}).
\end{aligned}
\end{equation}
The first term can be rewritten as
\begin{equation}
\begin{aligned}
&\sum_{i=t}^{t+h-1}\sum_{s_i,a_i}\rho^\pi(s_i)\pi(a_i|s_i)\sum_{\tilde{\tau}_{0:i}}\Gamma^\pi_i(\tilde{\tau}_{0:i}|s_i)\hat{R}_{\text{sub}}(\tilde{\tau}_{0:i},s_i,a_i)\\
=&\sum_{i=t}^{t+h-1}\sum_{\tilde{\tau}_{0:i}}\sum_{s_i,a_i}\mathcal{T}_i^\pi(\tilde{\tau}_{0:i})P(s_i|\tilde{s}_{i-1},\tilde{a}_{i-1})\pi(a_i|s_i)\hat{R}_{\text{sub}}(\tilde{\tau}_{0:i},s_i,a_i)\\
=&\sum_{i=t}^{t+h-1}\sum_{\tilde{\tau}_{0:i}}\sum_{s_i,a_i}\mathcal{T}_{i+1}^\pi(\tilde{\tau}_{0:i},s_i,a_i)\hat{R}_{\text{sub}}(\tilde{\tau}_{0:i},s_i,a_i)\\
=&\sum_{i=t}^{t+h-1}\sum_{\tilde{\tau}_{0:i+1}}\mathcal{T}_{i+1}^\pi(\tilde{\tau}_{0:i+1})\hat{R}_{\text{sub}}(\tilde{\tau}_{0:i+1}),
\end{aligned}
\end{equation}
while the second term can be rewritten as
\begin{equation}
\begin{aligned}
&\sum_{i=t}^{t+h-1}\sum_{s_i,a_i}\rho^\pi(s_i)\pi(a_i|s_i)\sum_{\tilde{\tau}_{0:i}}\Gamma^\pi_i(\tilde{\tau}_{0:i}|s_i)\hat{R}_{\text{sub}}(\tilde{\tau}_{0:i})\\
=&\sum_{i=t}^{t+h-1}\sum_{s_i,a_i}\rho^\pi(s_i)\pi(a_i|s_i)\sum_{\tilde{\tau}_{0:i}}\frac{\mathcal{T}_i^\pi(\tilde{\tau}_{0:i})P(s_i|\tilde{s}_{i-1},\tilde{a}_{i-1})}{\rho^\pi(s_i)}\hat{R}_{\text{sub}}(\tilde{\tau}_{0:i})\\
=&\sum_{i=t}^{t+h-1}\sum_{\tilde{\tau}_{0:i}}\mathcal{T}_i^\pi(\tilde{\tau}_{0:i})\hat{R}_{\text{sub}}(\tilde{\tau}_{0:i}).
\end{aligned}
\end{equation}
Since the first term is partial-cancellable with the second term, the final result is obtained as
\begin{equation}
\begin{aligned}
&\mathbb{E}_{\rho^\pi}\left[\sum_{i=t}^{t+h-1}\hat{r}^\pi(s_i,a_i)\right]\\
=&\sum_{\tau_{0:t+h}}\mathcal{T}_{t+h}^\pi(\tau_{0:t+h})\hat{R}_{\text{sub}}(\tau_{0:t+h})
-\sum_{\tau_{0:t}}\mathcal{T}_i^\pi(\tau_{0:t})\hat{R}_{\text{sub}}(\tau_{0:t})\\
=&\mathbb{E}_{\rho^\pi}\left[\hat{R}_{\text{sub}}(\tau_{0:t+h})-\hat{R}_{\text{sub}}(\tau_{0:t})\right].
\end{aligned}
\end{equation}

\subsection{A.2. Proof of Lemma \ref{theorem2}}
\label{theorem2_proof}
For value-based methods \cite{dqn, doubleq, duelq}, the policy is improved by directly applying $\arg\max_a Q(s,a)$. Since it is observed that
\begin{equation}
    \arg\max_{a} \hat{Q}^\pi(s,a)=\arg\max_{a} [Q^\pi(s,a)+\delta(s)]=\arg\max_{a} Q^\pi(s,a),
\end{equation}
for any $s\in\mathcal{S}$, any policy $\pi$ can be updated to the same policy during the policy improvement stage at each policy iteration, whether evaluated by $\hat{Q}^\pi$ or $Q^\pi$.

For policy gradient methods \cite{pg, trpo, sac, ppo}, given any policy $\pi_\theta$ parameterized by $\theta$, the derivation
\begin{equation}
\begin{aligned}
&\mathbb{E}_{s_t,a_t}\left[\nabla_\theta\log\pi_\theta(a_t|s_t)\hat{Q}^{\pi_\theta}(s_t,a_t)\right]\\
=&\mathbb{E}_{s_t,a_t}\left[\nabla_\theta\log\pi_\theta(a_t|s_t)\left[Q^{\pi_\theta}(s_t,a_t)+\delta(s_t)\right]\right]\\
=&\mathbb{E}_{s_t,a_t}\left[\nabla_\theta\log\pi_\theta(a_t|s_t)Q^{\pi_\theta}(s_t,a_t)\right]
+\mathbb{E}_{s_t,a_t}\left[\nabla_\theta\log\pi_\theta(a_t|s_t)\delta(s_t)\right]\\
=&\mathbb{E}_{s_t,a_t}\left[\nabla_\theta\log\pi_\theta(a_t|s_t)Q^{\pi_\theta}(s_t,a_t)\right]
+\mathbb{E}_{s_t}\left[\sum_{a_t}\pi_\theta(a_t|s_t)\nabla_\theta\log\pi_\theta(a_t|s_t)\delta(s_t)\right]\\
=&\mathbb{E}_{s_t,a_t}\left[\nabla_\theta\log\pi_\theta(a_t|s_t)Q^{\pi_\theta}(s_t,a_t)\right]
+\mathbb{E}_{s_t}\left[\delta(s_t)\sum_{a_t}\nabla_\theta\pi_\theta(a_t|s_t)\right]\\
=&\mathbb{E}_{s_t,a_t}\left[\nabla_\theta\log\pi_\theta(a_t|s_t)Q^{\pi_\theta}(s_t,a_t)\right]
+\mathbb{E}_{s_t}\left[\delta(s_t)\nabla_\theta\sum_{a_t}\pi_\theta(a_t|s_t)\right]\\
=&\mathbb{E}_{s_t,a_t}\left[\nabla_\theta\log\pi_\theta(a_t|s_t)Q^{\pi_\theta}(s_t,a_t)\right]\\
\end{aligned}
\end{equation}
indicates that the policy gradient computed by $\hat{Q}^\pi$ is the same as $Q^\pi$ in expectation, thus leading to the same optimal policy.

\subsection{A.3. Proof of Theorem \ref{theorem3}}
\label{theorem3_proof}
Any sub-trajectory starting from the pair $(s_t, a_t)$ is sampled according to the dynamics function $P$ and the policy $\pi$. The probability of the state $s_{t+h}$ ($h\in\{1,\ldots,T-t-1\}$) conditioned on $(s_t,a_t)$ is given by
\begin{equation}
\mu^\pi(s_{t+h}|s_t,a_t)=\sum_{s_{t+1},a_{t+1}}\sum_{s_{t+2},a_{t+2}}\cdots\sum_{s_{t+h-1},a_{t+h-1}}P(s_{t+1}|s_{t},a_{t})\left(\prod_{i=1}^{h-1}\pi(a_{t+i}|s_{t+i})P(s_{t+i+1}|s_{t+i},a_{t+i})\right),
\end{equation}
thus we can unfold the expectation-form as
\begin{equation}
\mathbb{E}_{P,\pi}\left[\left.\sum_{i=1}^{T-t-1}\hat{r}^\pi(s_{t+i},a_{t+i})\right|s_t,a_t\right]
=\sum_{i=1}^{T-t-1}\sum_{s_{t+i},a_{t+i}}\mu^\pi(s_{t+i}|s_t,a_t)\pi(a_{t+i}|s_{t+i})\hat{r}^\pi(s_{t+i},a_{t+i})
\end{equation}
Replacing the step-wise proxy reward function $\hat{r}^\pi$ with its definition \eqref{stepr}, we obtain
\begin{equation}
\begin{aligned}
&\mathbb{E}_{P,\pi}\left[\left.\sum_{i=1}^{T-t-1}\hat{r}^\pi(s_{t+i},a_{t+i})\right|s_t,a_t\right]\\
=&\sum_{i=1}^{T-t-1}\sum_{s_{t+i},a_{t+i}}\mu^\pi(s_{t+i}|s_t,a_t)\pi(a_{t+i}|s_{t+i})\sum_{\tilde{\tau}_{0:t+i}}\Gamma^\pi_{t+i}(\tilde{\tau}_{0:{t+i}}|s_{t+i})\hat{R}_d(\tilde{\tau}_{0:t+i},s_{t+i},a_{t+i})\\
=&\sum_{i=1}^{T-t-1}\sum_{s_{t+i},a_{t+i}}\mu^\pi(s_{t+i}|s_t,a_t)\pi(a_{t+i}|s_{t+i})\sum_{\tilde{\tau}_{0:t+i}}\Gamma^\pi_{t+i}(\tilde{\tau}_{0:t+i}|s_{t+i})\hat{R}_\text{sub}(\tilde{\tau}_{0:t+i},s_{t+i},a_{t+i})\\
&\quad\quad-\sum_{i=1}^{T-t-1}\sum_{s_{t+i},a_{t+i}}\mu^\pi(s_{t+i}|s_t,a_t)\pi(a_{t+i}|s_{t+i})\sum_{\tilde{\tau}_{0:t+i}}\Gamma^\pi_{t+i}(\tilde{\tau}_{0:t+i}|s_{t+i})\hat{R}_\text{sub}(\tilde{\tau}_{0:t+i}).
\end{aligned}
\end{equation}
Letting $\Upsilon^\pi_{t+i+1}(\tilde{\tau}_{0:t+i},s_{t+i},a_{t+i}|s_t,a_t)=\mu^\pi(s_{t+i}|s_t,a_t)\pi(a_{t+i}|s_{t+i})\Gamma^\pi_{t+i}(\tilde{\tau}_{0:t+i}|s_{t+i})$, the first term can be rewritten as
\begin{equation}
\begin{aligned}
&\sum_{i=1}^{T-t-1}\sum_{s_{t+i},a_{t+i}}\mu^\pi(s_{t+i}|s_t,a_t)\pi(a_{t+i}|s_{t+i})\sum_{\tilde{\tau}_{0:t+i}}\Gamma^\pi_{t+i}(\tilde{\tau}_{0:t+i}|s_{t+i})\hat{R}_\text{sub}(\tilde{\tau}_{0:t+i},s_{t+i},a_{t+i})\\
=&\sum_{i=1}^{T-t-1}\sum_{s_{t+i},a_{t+i}}\sum_{\tilde{\tau}_{0:t+i}}\Upsilon^\pi_{t+i+1}(\tilde{\tau}_{0:t+i},s_{t+i},a_{t+i}|s_t,a_t)\hat{R}_\text{sub}(\tilde{\tau}_{0:t+i},s_{t+i},a_{t+i})\\
=&\sum_{i=1}^{T-t-1}\sum_{\tilde{\tau}_{0:t+i+1}}\Upsilon^\pi_{t+i+1}(\tilde{\tau}_{0:t+i+1}|s_t,a_t)\hat{R}_\text{sub}(\tilde{\tau}_{0:t+i+1}),
\end{aligned}
\end{equation}
while the second term can be rewritten as
\begin{equation}
\begin{aligned}
&\sum_{i=1}^{T-t-1}\sum_{s_{t+i},a_{t+i}}\mu^\pi(s_{t+i}|s_t,a_t)\pi(a_{t+i}|s_{t+i})\sum_{\tilde{\tau}_{0:t+i}}\Gamma^\pi_{t+i}(\tilde{\tau}_{0:t+i}|s_{t+i})\hat{R}_\text{sub}(\tilde{\tau}_{0:t+i})\\
=&\sum_{i=1}^{T-t-1}\sum_{s_{t+i},a_{t+i}}\sum_{\tilde{\tau}_{0:t+i}}\Upsilon^\pi_{t+i+1}(\tilde{\tau}_{0:t+i},s_{t+i},a_{t+i}|s_t,a_t)\hat{R}_\text{sub}(\tilde{\tau}_{0:t+i})\\
=&\sum_{i=1}^{T-t-1}\sum_{\tilde{\tau}_{0:t+i}}\Upsilon^\pi_{t+i+1}(\tilde{\tau}_{0:t+i}|s_t,a_t)\hat{R}_\text{sub}(\tilde{\tau}_{0:t+i}).
\end{aligned}
\end{equation}
Since the first term is partial-cancellable with the second term, the final result is obtained as
\begin{equation}
\begin{aligned}
&\mathbb{E}_{P,\pi}\left[\left.\sum_{i=1}^{T-t-1}\hat{r}^\pi(s_{t+i},a_{t+i})\right|s_t,a_t\right]\\
=&\sum_{\tau_{0:T}}\Upsilon^\pi_{T}(\tau_{0:T}|s_t,a_t)\hat{R}_\text{sub}(\tau_{0:T})
-\sum_{\tau_{0:t+1}}\Upsilon^\pi_{t+1}(\tau_{0:t+1}|s_t,a_t)\hat{R}_\text{sub}(\tau_{0:t+1})\\
=&\mathbb{E}_{P,\pi}\left[\left.\hat{R}_\text{sub}(\tau_{0:T})-\hat{R}_\text{sub}(\tau_{0:t+1})\right|s_t,a_t\right].
\end{aligned}
\end{equation}

\section{B. Hyper-parameters}
\label{hyper}

\begin{table*}[h!]
\centering
\begin{tabular}{cc}
\toprule
Hyper-parameter     &  Default\\
\midrule
 hidden layers (all networks)  & 2\\
 hidden neurons & 256 \\
 activation & ReLU \cite{relu} \\
 optimizer (all losses) & Adam \cite{adam}\\
\midrule
RL-ALGO & SAC \cite{sac} \\
learning rate (all networks) & $3\times10^{-4}$\\
discount factor & 0.99 \\
initial temperature & 1.0\\
target entropy & -dim($\mathcal{A}$)\\
target network smoothing coefficient & 0.005\\
\midrule
replay buffer capacity & $10^6$\\
mini-batch size & 256 \\
\bottomrule
\end{tabular}
\caption{The hyper-parameter configuration of Diaster in MuJoCo experiments.}
\end{table*}

\end{document}